\begin{document}

\title{Unscented Kalman filter with stable embedding for simple, accurate and computationally efficient state estimation of systems on manifolds in Euclidean space}

\author{Jae-Hyeon Park}

\author{Dong Eui Chang*}

\authormark{Jae-Hyeon Park, and Dong Eui Chang}

\address{\orgdiv{School of Electrical Engineering}, \orgname{Korea Advanced Institute of Science and Technology}, \orgaddress{\state{Daejeon}, \country{Korea}}}

\corres{Dong Eui Chang, School of Electrical Engineering, Korea Advanced Institute of Science and Technology, 291 Daehak-ro, Yuseong-gu, Daejeon 34141, Republic of Korea. \email{dechang@kaist.ac.kr}}

\abstract[Summary]{This paper proposes a simple, accurate and computationally efficient method to apply the ordinary unscented Kalman filter developed in Euclidean space to  systems whose dynamics evolve on manifolds. We use the mathematical theory called stable embedding to make a variant of unscented Kalman filter that keeps state estimates in close proximity to the manifold while exhibiting excellent estimation performance. We confirm the performance of our devised filter by applying it to the satellite system model and comparing the performance with other unscented Kalman filters devised specifically for systems on manifolds. Our devised filter has a low estimation error, keeps the state estimates in close proximity to the manifold as expected, and consumes a minor amount of computation time. Also our devised filter is simple and easy to use because our filter directly employs the off-the-shelf standard unscented Kalman filter devised in Euclidean space without any particular manifold-structure-preserving discretization method or  coordinate transformation.}

\keywords{stable embedding, unscented Kalman filter, attitude estimation, satellite, manifold}

\jnlcitation{\cname{%
\author{J. Park},
\author{D. E. Chang}} 
 (\cyear{2021}), 
\ctitle{Unscented Kalman filter with stable embedding for simple, accurate and computationally efficient state estimation of systems on manifolds in Euclidean space}, \cjournal{Int J Robust Nonlinear Control.}, \cvol{2021}.}

\maketitle


\section{Introduction}\label{Introduction}

Filtering and state estimation is an important research field that is essential in various areas of industry such as process control, tracking, navigation, and robot control. 
Recently, researchers are designing filtering and state estimation techniques specifically devised for systems on manifolds. 
This is because many systems are inherently defined on manifolds, and applying filters devised in Euclidean space directly to the systems on manifolds may cause inaccuracy and divergence in state estimation. 
Examples of systems on manifolds are quadrotors,\cite{fresk2013full} winged unmanned aerial vehicles,\cite{guo2019low} satellites,\cite{soken2014robust} robot arms,\cite{jeon2009kinematic} and arrival tracking in antenna array processing.\cite{rentmeesters2010filtering} 
One of the major drawbacks of working with systems on manifolds is that it does not well accommodate many tools available for systems in Euclidean space. 
There have been efforts to generalize the unscented Kalman filter (UKF) to the filtering and state estimation of systems on manifolds because it is one of the popular filtering techniques for nonlinear systems.

The UKF was first devised for accurate filtering and state estimation of nonlinear systems on Euclidean space. 
It became one of the major estimation techniques because it can approximate the statistics of state estimates up to the second order.\cite{julier1997new} 
The detailed explanation of UKF devised in Euclidean space can be found in the paper by Wan and Van der Merwe\cite{wan2000unscented} and we will call it the standard UKF to distinguish it from the variants of UKF later introduced. 
The direct application of the standard UKF to systems on manifolds may not give accurate estimates, because the standard UKF is derived in Euclidean space, so state estimates deviate from the manifolds during UKF process. 
There are variants of UKF that have been developed for filtering and state estimation of systems on manifolds. 
Sipos \cite{sipos2008application} devised an UKF for systems on Lie group, which is a special family of manifolds with group structure. 
This UKF used the property of Lie algebra which is a tangent space at the identity of Lie group and defined sigma points of unscented transform on the Lie algebra. 
Hauberg et al. \cite{hauberg2013unscented} devised an UKF for systems on Riemannian manifold, which is a family of differentiable manifolds with a Riemannian metric defined.
This UKF defined sigma points of unscented transform on the tangent space of the manifold and employed geodesics and parallel transformations. 
These UKFs are practical since many systems of robotics are defined on sets that are both Lie group and Riemannian manifold. i.e. special orthogonal group, special Euclidean group, and n-sphere.
Brossard et al. \cite{brossard2020code} used a similar framework of UKF on manifolds as Hauberg et al. and made an open-source code that can be used for practitioners not being familiar with manifolds and Lie groups. 
Cantelobre et al. \cite{cantelobre2020real} applied the method of Brossard et al. on position estimation of autonomous underwater vehicles.
Sj{\o}berg and Egeland \cite{sjberg2021lie} developed an UKF for matrix Lie groups.
These variants of UKF require special discretized form of systems and manifold-structure-preserving computation steps which can be computationally expensive.

In this article, we propose a method to directly apply the standard UKF devised in Euclidean space to systems on manifolds. 
We use the mathematical theory called {\it stable embedding} that was first founded in the paper by Chang et al.\cite{chang2016feedback} 
Stable embedding was previously applied to extended Kalman filter (EKF) of systems defined on special orthogonal group and the devised filter was computationally efficient. \cite{park2021transversely}
We extend systems on manifolds to an ambient Euclidean space in such a way that the manifold becomes an invariant attractor or an asymptotically stable invariant set for the extended system in the Euclidean space. 
Then we discretize the modified system using the ordinary Euler discretization method and apply the standard UKF derived in Euclidean space to the resultant discrete-time system. 
The reason why we use the Euler method is that it is the simplest one, but one could also use any off-the-shelf discretization method such as the Runge-Kutta 4th-order method which would only increase the accuracy of our proposed UKF. 
By inheriting the stability property of the continuous-time extended system before the discretization, this UKF has high estimation accuracy thanks to state estimates staying in close proximity of the manifold. 
We use the satellite attitude system model \cite{kristiansen2005satellite} to show the excellent performance of our devised UKF. 
The satellite system model has its attitude quaternion state defined on 3-sphere $\operatorname{S}^3=\{\mathbf{q} \in \mathbb{H} \mid \left \| \mathbf{q} \right \|=1\}$, where $\mathbb{H}$ is the set of quaternions which is isomorphic to $\mathbb{R}^{4}$ as a vector space. 
We have chosen the system defined on $\operatorname{S}^3$, because the state space $\operatorname{S}^3$ is both a Lie group and a Riemannian manifold. 
We  compare our UKF with other UKFs designed for systems on manifolds. 
There are also Kalman filters available that apply to  continuous-time systems directly.\cite{frogerais2011various, sarkka2007unscented} They require integration of differential momentum equations which can be computationally complex. 
Hence, applying Kalman filters to  discrete form of continuous-time systems is frequently used in real applications due to fast computation. 
Since, one of the merits of our proposed UKF is fast computation, we choose to discretize continuous-time systems and then apply the UKFs to them.

This paper is organized as follows. Section \ref{sec:UKFs for manifolds} briefly explains the existing methods of UKF for systems on manifolds. 
Section \ref{sec:Stable embedded UKF} describes our method of UKF for systems on manifolds. 
Section \ref{sub:Simulation} shows the performance of our method with a simulation on satellite attitude estimation. 
The conclusion is presented in Section \ref{sec:Conclusion}. 
An appendix is provided at the end of the paper to give a brief review of the standard UKF and a guideline for choosing a value of the stable embedding constant later introduced.


\section{Review of existing methods of  UKF for systems on manifolds}\label{sec:UKFs for manifolds}

Before we explain our method, we briefly introduce existing methods of applying UKF to systems on manifolds. 
Every function and manifold is assumed to be smooth in this paper unless stated otherwise.
Let $M$ be a manifold with dimension $n_M$ in Euclidean space $\mathbb{R}^{n_x}$ where $n_M < n_x$. A general continuous-time system on $M$ is given by 
\begin{subequations} \label{E:continuous system}
\begin{align}
\dot{x} &= f(x, u), \quad x\in M\subseteq \mathbb{R}^{n_x}, u\in \mathbb{R}^{n_u}\\
y &= g(x, u), \quad x \in M\subseteq  \mathbb {R}^{n_x}, u\in \mathbb{R}^{n_u},
\end{align}
\end{subequations}
where $f: M \times \mathbb{R}^{n_u} \rightarrow TM$ is the state differential equation, $x$ is the state, $u$ is the input, $g: M \times \mathbb{R}^{n_u} \rightarrow \mathbb{R}^{n_y}$ is the measurement equation, $y \in \mathbb{R}^{n_y}$ is the measurement. 
Here, $TM$ denotes the tangent bundle  of the manifold $M$. 

Suppose that $f$  can be extended to an open neighborhood of the manifold $M$ in $\mathbb R^{n_x}$. 
If we were to naively apply the standard UKF devised in Euclidean space, we would discretize the system \eqref{E:continuous system} using the ordinary Euler discretization without considering the manifold $M$, and apply the standard UKF. 
In Appendix \ref{Appendix 1}, we give a brief explanation of the standard UKF described in the paper by Wan and Van der Merwe \cite{wan2000unscented} and the typical values of  parameters used in the standard UKF.
Since  the manifold $M$ is not closed on Euclidean addition and subtraction, there arises the problem that $M$ may not be stable for the discretized system, i.e,  discrete-time trajectories of the discretized system may diverge off the manifold $M$. 
Some researchers use the forced projection method to compensate for the state estimates deviating from the manifold. 
This method projects the state estimates $\hat{x}_{k|k}$ at the end of each UKF update to the manifold $M$ by solving the following optimization problem $\hat{x}_{new, k|k} = \arg\max_{x\in M}\left \| x- \hat{x}_{k|k}\right \|^{2}$. 
This least-squares optimization problem can be solved by using an appropriate gradient descent algorithm regarding the manifold, but it can be computationally expensive. 
Detailed explanation of various optimization algorithms can be found in the book by Chong and Zak.\cite{chong2004introduction}

 Some researchers transform the coordinates of state estimate points on manifolds to vector spaces, apply the usual UKF correction steps on the vector spaces and transform the coordinates of points in vector spaces back to the manifolds to get the next state estimates, because many tools of Euclidean space are available in vector spaces. 
 The UKF from the paper by Sipos\cite{sipos2008application} is devised specifically for the case when the manifold $M$ is a Lie group. 
 First, discretize the continuous-time system (\ref{E:continuous system}) defined on the manifold $M$ into a specific form so that the states of discretized system stay on $M$ as follows:
\begin{equation} \label{E:discretized on M}
x_{k+1} = f_D(x_k, u_k),\;  x_k \in M\;  \forall k \in \mathbb N_{\geq 0}. 
\end{equation}
Then apply the variant of UKF that defines sigma points on the Lie algebra of the Lie group, where it is noted that the Lie algebra is a vector space. 
This UKF needs computation of several coordinate transformations between the Lie group and its Lie algebra at each step of estimation, which is computationally intensive and expensive; refer to the book by Hall\cite{hall2015lie} on Lie groups and Lie algebras. 
The UKF from the paper by Hauberg et al.\cite{hauberg2013unscented} is devised for Riemannian manifolds; refer to  the book by Carmo\cite{carmo1992riemannian} on Riemannian manifolds. 
They first discretize the continuous-time system (\ref{E:continuous system}) defined on the manifold $M$ into a specific form so that the states of the discretized system stay on $M$ just like the system (\ref{E:discretized on M}). 
Then apply the variant of UKF that defines sigma points on the tangent spaces at the state estimates on the manifold, where it is noted that the tangent spaces of a manifold are vector spaces. 
This UKF also needs computation of several coordinate transformations between the manifold and its tangent spaces at each step of estimation, which is computationally intensive and expensive. 
In summary, the above two methods proposed in the paper by Sipos\cite{sipos2008application} and the paper by Hauberg et al.\cite{hauberg2013unscented} both need coordinate changes at each step of estimation which can be computationally expensive.


\section{The UKF with stable embedding}\label{sec:Stable embedded UKF}

Here, we devise a variant of UKF that can apply the standard UKF derived in Euclidean space to systems on manifolds without any additional manifold-structure-preserving discretization or coordinate transformations. 
Our method is simple and computationally efficient as good as the standard UKF. 
We use the mathematical theory called stable embedding in the paper by Chang et al.\cite{chang2016feedback}
First, extend the system (\ref{E:continuous system}) from $M$ to $\mathbb{R}^{n_x}$ as follows:
\begin{subequations} \label{E:E system}
\begin{align}
    \dot{x} &= f_{E}(x, u), \quad x\in \mathbb{R}^{n_x}, u\in \mathbb{R}^{n_u}\\
    y &= g_{E}(x, u), \quad x\in \mathbb{R}^{n_x}, u\in \mathbb{R}^{n_u},
\end{align}
\end{subequations}
such that
\begin{subequations} \label{E:E system con}
\begin{align}
    f_{E}(x, u)&= f(x, u), \quad \forall \, x\in M, u\in \mathbb{R}^{n_u}\\
    g_{E}(x, u) &= g(x, u), \quad \forall x \in M, u\in \mathbb{R}^{n_u},
\end{align}
\end{subequations}
where $f_{E}: \mathbb{R}^{n_x} \times \mathbb{R}^{n_u} \rightarrow \mathbb{R}^{n_x}$ and $g_{E}: \mathbb{R}^{n_x} \times \mathbb{R}^{n_u} \rightarrow \mathbb{R}^{n_y}$. 
Condition (\ref{E:E system con}) implies that the extended system (\ref{E:E system}) is identical to the original system (\ref{E:continuous system}) when $x \in M$. 
Thus, $M$ is an invariant set of the extended system (\ref{E:E system}). 
Then, we modify the system (\ref{E:E system}) to make the manifold $M$ asymptotically stable, i.e. all flows of the system starting from a fixed neighborhood of $M$ asymptotically converge to $M$ as time tends to infinity. 
Suppose there is a function $V : \mathbb{R}^{n_x} \rightarrow \mathbb{R}_{\geq0}$ such that 
\begin{equation}\label{E:V condition1}
M = V^{-1}(0),
\end{equation}
and
\begin{equation}\label{E:V condition2}
\langle \nabla V ( x  ),\,  f_{E}(x, u)\rangle=0, \forall \, x\in \mathbb R^{n_x}, u\in \mathbb{R}^{n_u},
\end{equation}
where $V^{-1}(0) = \left \{ x \in \mathbb{R}^{n_x} | V(x) = 0  \right \}$, and $\langle \, ,\rangle$ is the Euclidean inner product. Consider the following modified system
\begin{subequations} \label{E:SE system}
\begin{align}
    \dot{x} &=  f_{E}(x, u) - \alpha\nabla V ( x  ), \quad x\in \mathbb{R}^{n_x}, u\in \mathbb{R}^{n_u}\\
    y &= g_{E}(x, u), \quad x\in \mathbb{R}^{n_x}, u\in \mathbb{R}^{n_u},
\end{align}
\end{subequations}
where $\alpha > 0$ is a constant parameter called stable embedding constant. 
Since the function $V$ attains its minimum value of $0$ on $M$, $ \nabla V ( x  )=0$ for all $x\in M$. 
So the system (\ref{E:SE system}) coincides with the original system (\ref{E:continuous system}) on $M$ and $M$ is an invariant set of (\ref{E:SE system}). 
Since $-\nabla V$ points to the direction of decreasing $V$, it can be shown that $M$ is asymptotically stable for the system (\ref{E:SE system}) on certain conditions. 
More concretely speaking, along each trajectory of the system \eqref{E:SE system}, 
\[
\frac{dV}{dt} = \langle \nabla V, f_E(x,u) - \alpha \nabla V\rangle = -\alpha \|\nabla V\|^2 \leq 0
\]
by \eqref{E:V condition2}. Since $V$ attains its global minimum value $0$, $\nabla V (x) = 0$  in a neighborhood of $M$ if and only if $x\in M$. 
Hence, by Lyapunov's stability theory,  the manifold $M$ is asymptomatically stable for the system \eqref{E:SE system}. 
Also, the above inequality shows that the value of parameter $\alpha$ determines the rate of convergence to $M$; refer to Theorem 1 in the paper by Chang\cite{chang2018controller} for more on the stability conditions for $M$. 
An easy way to construct the function $V$ is to set $V(x) = \| \ell(x) - c\|^2$ when the manifold $M$ is defined by a level set $\ell(x) = c$ of some function $\ell (x)$.

We now add additive noises on the system (\ref{E:SE system}) to produce a stochastic continuous-time system as follows:
\begin{subequations} \label{E:stochastic SE system}
\begin{align}
    \dot{x} &=  f_{E}(x, u) - \alpha\nabla V ( x  ) + w_C, \quad x\in \mathbb{R}^{n_x}, u\in \mathbb{R}^{n_u}, w_C \in \mathbb{R}^{n_x}\\
    y &= g_{E}(x, u) + v_C, \quad x\in \mathbb{R}^{n_x}, u\in \mathbb{R}^{n_u}, v_C \in \mathbb{R}^{n_y},
\end{align}
\end{subequations}
where $x(0) \sim (x_{0}, P_{0})$ is the initial condition of $x$, $w_{C} \sim (0, P_{w,C})$ is the process noise, and  $v_C \sim (0, P_{v, C})$ is the measurement noise. 
We can apply additive noises because the system (\ref{E:SE system}) is globally defined in the Euclidean space. 
Here, $(\mu, P)$ stands for a white noise with mean $\mu$ and covariance $P$. 
It is assumed that $x(0)$, $w_C$, and $v_C$ are mutually uncorrelated white noises.

We discretize the system (\ref{E:stochastic SE system}) using the ordinary Euler discretization with time step $h>0$ to apply the standard UKF derived in Euclidean space as follows:
\begin{subequations} \label{E:discretized system}
\begin{align}
x_{k+1} &=  x_k + h(f_{E}(x_k, u_k) - \alpha\nabla V ( x_k  )) + w_D, \\
y_{k} &= g_{E}(x_{k}, u_{k}) + v_D,
 \end{align}
\end{subequations}
where $x_{k} = x(kh)$, $u_{k} = u(kh)$, and $y_k = y(kh)$ for $k\in \mathbb{N}_{\geq 0}$. 
The stochastic properties are as follows: $x_0 \sim (x_{0}, P_{0})$, $w_{D} \sim (0, P_{w, D})$ where $P_{w, D} \approx  hP_{w, C}$, and $v_{D} \sim (0, P_{v, D})$  where $P_{v, D} \approx P_{v, C}/h$. A detailed explanation of discretization of a stochastic continuous-time system can be found in the book by Lewis et al.\cite[p.84]{lewis2017optimal} 
Then we apply the standard UKF derived in Euclidean space, described in Appendix  \ref{Appendix 1}, to the system \eqref{E:discretized system} directly, which is valid since the system \eqref{E:discretized system} is globally defined in Euclidean space. 

We call the above procedure as the UKF with stable embedding (UKF-SE). 
The big merit of this method is that all trajectories of the discretized form of the continuous-time system \eqref{E:SE system} starting from some neighborhood of $M$ stay in close proximity of $M$.\cite{chang2016feedback} 
In the following section, we apply UKF-SE to the satellite system model.


\section{Simulation}\label{sub:Simulation}

In this section, we conduct numerical simulations with MATLAB to compare performance of UKF-SE with other UKFs. 
The computation platform used for conducting these numerical studies is: CPU - Intel(R) Core(TM) i7-8700 @ 3.20GH, memory - 16GB RAM, and software - MATLAB R2018b. 
We use the satellite system model from the paper by Kristiansen and Nicklasson\cite{kristiansen2005satellite} and the paper by Hajiyev and Soken\cite{hajiyev2014robust} for applying various UKFs. 
Most of the constants and satellite parameters used in the system are from the paper by Hajiyev and Soken.\cite{hajiyev2014robust} 
We organize the parameter values used in our simulation in Table \ref{tab1}.
We denote the Earth-centered inertial reference frame by $I$, the orbit reference frame by $O$, and the body reference frame by $B$ as in the paper by Kristiansen and Nicklasson.\cite{kristiansen2005satellite} 
The stochastic continuous-time satellite system model is given as follows: 
\begin{subequations} \label{E:satellite system}
\begin{align}
\dot{\mathbf{q}}_{BO} &= \frac{1}{2}\Lambda (\mathbf{q}_{BO})[0; \Omega_{BO}], \\ 
\dot{\Omega}_{BI} &= \mathbb{I}^{-1}((\mathbb{I}\Omega_{BI}) \times \Omega_{BI}) + \mathbb{I}^{-1}\tau_{g} + w_{\Omega,C},
\end{align}
\end{subequations}
where $\mathbf{q}_{BO} \in \operatorname{S}^3$ is the satellite attitude quaternion with respect to the orbit frame; $\Omega_{BI} \in \mathbb{R}^3$ is the satellite angular velocity with respect to the inertial frame; 
$\Lambda (\mathbf{q}_{BO})\in \mathbb{R}^{4 \times 4}$ with $\mathbf{q}_{BO} = (q_0, q_1, q_2, q_3)$ is given as 
\begin{align*}
\Lambda (\mathbf{q}_{BO}) = \begin{bmatrix} 
q_0 & -q_1 & -q_2 & -q_3 \\
q_1 & q_0 & -q_3 & q_2 \\
q_2 & q_3 & q_0 & -q_1 \\
q_3 & -q_2 & q_1 & q_0
     \end{bmatrix};
 \end{align*}
$\Omega_{BO} \in \mathbb{R}^3$ is the satellite angular velocity with respect to the orbit frame given as
\begin{equation*}
\Omega_{BO}=\Omega_{BI} + A_{BO} \begin{bmatrix} 0 & -\omega_0 & 0     \end{bmatrix}^T;
\end{equation*}
 $\tau_{g}$ is the gravity gradient torque given as
\begin{equation*}
\tau_{g} = 3 \omega_0^2 A_{BO,3} \times (\mathbb{I} A_{BO,3});
\end{equation*}
 $\mathbb{I}$ is the moment of the inertia matrix of the satellite;  $w_{\Omega,C}\in \mathbb{R}^{3}$ is the process noise which follows the Gaussian distribution $N(0, \sigma^2_{\Omega,C} \cdot I_{3})$ with mean 0 and covariance $\sigma^2_{\Omega,C} \cdot I_{3}$;
$A_{BO}\in \mathbb{R}^{3 \times 3}$ is the rotation matrix of $\mathbf{q}_{BO} = (q_0, q_1, q_2, q_3)$ given as 
\begin{align*}
A_{BO} = \begin{bmatrix} 
(q_0^2 + q_1^2 - q_2^2 - q_3^2) & 2(q_1 q_2 - q_0 q_3) & 2(q_1 q_3 + q_0 q_2)\\
2(q_1 q_2 + q_0 q_3) & (q_0^2 - q_1^2 + q_2^2 - q_3^2) & 2(q_2 q_3 - q_0 q_1)\\
2(q_1 q_3 - q_0 q_2) & 2(q_2 q_3 + q_0 q_1) & (q_0^2 - q_1^2 - q_2^2 + q_3^2) 
     \end{bmatrix};
 \end{align*}
$\omega_0 \approx (\mu / r^3_0)^{1/2}$ is the angular velocity of the orbit with respect to the inertial frame which is a constant;
$\mu$ is the Earth's gravitational coefficient;
$r_0$ is the distance between the satellite and the Earth;
and $A_{BO,3}(t)$ is the 3rd column of $A_{BO}$. 
The initial state for the satellite system is $(\mathbf{q}_{BO}(0), \Omega_{BI}(0)) \sim N((\overline{\mathbf{q}}_{BO,0},\overline{\Omega}_{BI,0}), P_{0})$ where $\overline{\mathbf{q}}_{BO,0} =  (\cos(\pi/4), \sin(\pi/4), 0, 0)$, $\overline{\Omega}_{BI,0} = (-0.01, 0.01, 0.01) [\si{\radian \per \second}]$, and $P_{0} =\operatorname{diagonal}(10^{-8} \cdot  I_{4},\:  0.01^2 \cdot  I_{3})$. 

 The satellite measurement sensors are composed of a magnetometer and a rate gyro. The magnetometer measurement model is given as follows: 
\begin{align}\label{H:def}
\begin{bmatrix}H_x \\ H_y \\ H_z \end{bmatrix} = A_{BO}\begin{bmatrix}H_1 \\ H_2 \\ H_3 \end{bmatrix} + v_{mag,C},
 \end{align}
where $H_x$, $H_y$, and $H_z$ are the measured Earth's magnetic field vector components in the body frame. 
Here, $H_1$, $H_2$, and $H_3$ are the Earth's magnetic field vector components in the orbit frame as a function of time given as 
\begin{align*}
H_1 &= \frac{M_e}{r^3_0}\left \{ \cos(\omega_0t)[\cos(\epsilon) \sin(i) - \sin(\epsilon) \cos(i)\cos(\omega_et)] -  \sin(\omega_0t)\sin(\epsilon)\sin(\omega_et) \right \}, \\
H_2 &= -\frac{M_e}{r^3_0}[\cos(\epsilon) \cos(i) + \sin(\epsilon) \sin(i)\cos(\omega_et)], \\
H_3 &= \frac{2M_e}{r^3_0}\left \{ \sin(\omega_0t)[\cos(\epsilon) \sin(i) - \sin(\epsilon) \cos(i)\cos(\omega_et)] -  2\sin(\omega_0t)\sin(\epsilon)\sin(\omega_et) \right \},
 \end{align*}
where $M_e$ is the magnetic dipole moment of the Earth, $\epsilon$ is the magnetic dipole tilt, $i$ is the orbit inclination, and $\omega_e$ is the spin rate of the Earth. 
In \eqref{H:def}, $v_{mag,C}\in \mathbb{R}^{3}$ is the magnetometer measurement noise which follows $N(0, \sigma^2_{mag,C} \cdot I_{3})$. The rate gyros measurement model is given as follows:
\begin{align}\label{Omega:def}
\Omega_{BI,meas} = \Omega_{BI} + v_{rate,C},
 \end{align}
where $\Omega_{BI,meas}$ is the measured angular rate of the satellite, and $v_{rate,C}\in \mathbb{R}^{3}$ is the rate gyros measurement noise which follows $N(0, \sigma^2_{rate,C} \cdot I_{3})$. All the measurements are measured with a sampling frequency $f$.

\begin{table}[t]%
\centering
\caption{Parameter values used in our simulation \label{tab1}}%
\begin{tabular*}{500pt}{@{\extracolsep\fill}lll@{\extracolsep\fill}}
\toprule
 \textbf{Notation} & \textbf{Parameter} & \textbf{Value}\\
\midrule
$\mathbb{I}$ & the moment of the inertia matrix of the satellite & $\operatorname{diag}(2.1, 2.0, 1.9)\times10^{-3} \, [\si{\kilogram \,  \metre^{2}}]$  \\
$\sigma_{\Omega,C}$ & the standard deviation of the process noise & $0.01 \, [\si{\radian \per \second}]$  \\
$\mu$ & the Earth's gravitational coefficient & $3.98601\times10^{14} \, [\si{\metre^3 \per \second^2}]$ \\
$r_0$ & the distance between the satellite and the Earth & $6928140 \, [\si{\metre}]$ \\
$M_e$ & the magnetic dipole moment of the Earth & $7.943\times 10^{15} \, [\si{\weber \metre}]$\\ 
$\epsilon$ & the magnetic dipole tilt & $11.7\si{\degree}$ \\
$i$ & the orbit inclination & $97\si{\degree}$ \\
$\omega_e$ & the spin rate of the Earth & $7.29\times10^{-5} \, [\si{\radian \per \second}]$ \\
$\sigma_{mag,C}$ & the standard deviation of the magnetometer measurement noise & $200 \, [\si{\nano\tesla}]$ \\
$\sigma_{rate,C}$ & the standard deviation of the rate gyros measurement noise & $1.84 \times 10^{-4} \, [\si{\radian \per \second}]$\\
$f$ & the sampling frequency of the measurement & $10 \, [\si{\hertz}]$ \\
\bottomrule
\end{tabular*}
\end{table}

The satellite system (\ref{E:satellite system}) has a state $(\mathbf{q}_{BO}, \Omega_{BI})$ in $\operatorname{S}^{3} \times \mathbb{R}^3$ which is 6 dimensional manifold in $\mathbb{H} \times \mathbb{R}^3$. We do not consider $\Omega_{BO}$ as a state because it does not have a differential equation. 
We first extend the system (\ref{E:satellite system}) from $\operatorname{S}^3 \times \mathbb{R}^3$ to $\mathbb{H} \times \mathbb{R}^3$. Define a function $V(\mathbf{q}_{BO}, \Omega_{BI}) : \mathbb{H}\times\mathbb{R}^3 \rightarrow \mathbb{R}_{\geq0}$ as
\begin{equation*}
    V(\mathbf{q}_{BO}, \Omega_{BI}) = \frac{1}{4}(\left \| \mathbf{q}_{BO} \right \|^2 - 1)^2.
\end{equation*}
It is easy to see that $\operatorname{S}^3 \times \mathbb{R}^3 = V^{-1}(0)$. The derivative of $V$ is calculated as $\nabla_{\mathbf{q}_{BO}}{V}=(\left \| \mathbf{q}_{BO} \right \|^2 - 1)\mathbf{q}_{BO}$, and $\nabla_{\Omega_{BI}}{V}=0$. This $V$ satistifes the conditions (\ref{E:V condition1}) and (\ref{E:V condition2}) for $V$ in Section \ref{sec:Stable embedded UKF}. We add $-\alpha\nabla{V}$ where $\alpha>0$ to the extended system to get the final modified system as follows:
\begin{subequations}\label{final:sys:H}
\begin{align}
\dot{\mathbf{q}}_{BO} &= \frac{1}{2}\Lambda (\mathbf{q}_{BO})[0; \Omega_{BO}] - \alpha (\left \| \mathbf{q}_{BO} \right \|^2 - 1)\mathbf{q}_{BO}, \\ 
\dot{\Omega}_{BI} &= \mathbb{I}^{-1}((\mathbb{I}\Omega_{BI}) \times \Omega_{BI}) + \mathbb{I}^{-1}\tau_{g} + w_{\Omega,C},
\end{align}
\end{subequations}
where $(\mathbf{q}_{BO}, \Omega_{BI})  \in \mathbb H \times \mathbb R^3$. 
This is the stably embedded version (\ref{E:SE system}) of the satellite system model (\ref{E:satellite system}), i.e. $\operatorname{S}^3 \times \mathbb{R}^3$ is an invariant manifold of the modified system and is exponentially stable for the modified system; refer to Lemma 1 in Ko et al's paper.\cite{ko2021tracking}
Then, we may apply the standard UKF derived in Euclidean space directly to the system model \eqref{final:sys:H} and the measurement models given in \eqref{H:def} and \eqref{Omega:def}.

Since the satellite system in nature is a continuous-time system, we generate a continuous-time trajectory of the satellite using \eqref{E:satellite system}. 
We integrate \eqref{E:satellite system} for $200 [\si{\second}]$ and take measurements at discrete time steps of $t = kh, k\in \mathbb{N}_{\geq 0}$ with a time step of $h = \frac{1}{f}= 0.1 [\si{\second}]$. 
For each type of UKF, we conduct one thousand Monte Carlo runs with each run simulated for 200 seconds. 
While implementing UKF loops, we use the covariance matrix of the initial state $P_{0} = \operatorname{diagonal}(10^{-8} \cdot  I_{4},\:  0.01 \cdot  I_{3})$ and the covariance matrix of the process noise $P_{w, D} = \operatorname{diagonal}(10^{-8} \cdot  I_{4},\:  h\sigma^2_{\Omega,C} \cdot I_{3})$.
We calculate the mean squared error (MSE) of attitude estimates, and mean distance (MD) between attitude estimates and $\operatorname{S}^3$ which are calculated as follows, respectively:
\begin{subequations} \label{E:MSEMD}
\begin{align}
MSE_{\mathbf{q}} &= E[ \| \mathbf{q}(kh) - \hat{\mathbf{q}}_{k} \|^{2} ]\approx \frac{1}{N}\sum_{i=1}^{N}\| \mathbf{q}^{(i)}(kh) - \hat{\mathbf{q}}_{k}^{(i)} \|^{2}, \\
MD_{\mathbf{q}, \operatorname{S}^3} &=  E[ \left |  \| \hat{\mathbf{q}}_{k} \| - 1  \right |]\approx \frac{1}{N}\sum_{i=1}^{N}\left |  \| \hat{\mathbf{q}}^{(i)}_{k} \| - 1  \right |,
 \end{align}
\end{subequations}
where $\mathbf{q}(kh)$ is the real attitude at time $t=kh$ in the continuous-time trajectory, $\hat{\mathbf{q}}_{k}$ is the attitude estimate at time $t=kh$ using each UKF, $E\left [ \cdot  \right ]$ is defined as the expectation with respect to $N=1000$ simulations, and the superscript $(i)$ stands for the $i$-th trial of the $N$ simulations. 
$MSE_{\mathbf{q}}$ represents estimation error of the attitude estimates. 
$MD_{\mathbf{q}, \operatorname{S}^3}$ represents how close the attitude estimates are from $\operatorname{S}^3$ which is the unit quaternion set.
The best $\alpha$ with a minimum estimation error is dependent on the system and the time step, and a general guideline for choosing a value of $\alpha$ is provided in Appendix  \ref{Appendix 2}.

We conduct comparison simulation of our UKF-SE with other existing methods of applying UKFs to systems on manifolds introduced in Section \ref{sec:UKFs for manifolds}. 
We apply the following types of filters on the sampled discrete-time signals of the satellite system model:

\begin{enumerate} [label=(\roman*)]
\item The standard UKF: 
Extend the satellite system model (\ref{E:satellite system}) from $\operatorname{S}^3 \times \mathbb{R}^3$ to $\mathbb{H} \times \mathbb{R}^3$ and apply the standard UKF discussed in Appendix  \ref{Appendix 1}.

\item Projection by optimization: 
This is the forced projection method introduced in Section \ref{sec:UKFs for manifolds}. 
We project the attitude estimates $\hat{\mathbf{q}}_{k|k}$ to the manifold $\operatorname{S}^3$ at the end of each estimation step of (i) by solving $\hat{\mathbf{q}}_{new, k|k} = \arg\max_{\mathbf{q}\in \operatorname{S}^3}\left \| \mathbf{q}- \hat{\mathbf{q}}_{k|k}\right \|^{2}$. 
We use the Lagrangian multiplier method  with the equality constraint $\left \| \mathbf{q} \right \|^{2} - 1 = 0,\;  \mathbf{q} \in \mathbb{H}$ to solve this optimization problem.\cite{chong2004introduction} We use the MATLAB command `fmincon'  to implement the optimization by Lagrangian algorithm.

\item Projection by formula: 
This is for the specific case when the system is defined on $\operatorname{S}^3$. 
After the update is over in (i), we project the state estimate $\hat{\mathbf{q}}_{k|k}$ to the manifold $\operatorname{S}^3$ by using the formula $\hat{\mathbf{q}}_{new, k|k} = \hat{\mathbf{q}}_{k|k} / \left \| \hat{\mathbf{q}}_{k|k} \right \|$.

\item UKF-SE: This is our proposed UKF. From (\ref{Eq: alpha con1}) in Appendix \ref{Appendix 2} and $h=0.1$, $\alpha$ should satisfy $0 < \alpha < \frac{20}{3}$. We choose $\alpha=2$ for our UKF-SE.

\item Lie algebra: 
This UKF is devised specifically for the systems defined on Lie groups \cite{sipos2008application} as introduced in Section \ref{sec:UKFs for manifolds}. 
The specific discretized form (\ref{E:discretized on M}) for the quaternion attitude system is introduced in the paper by Kraft.\cite{kraft2003quaternion} 
This UKF needs several coordinate transformations between $\operatorname{S}^3$ and its Lie algebra at each estimation step. 
The Lie algebra of $\operatorname{S}^3$ is isomorphic to $\mathbb{R}^3$. 
Given a unit quaternion $\mathbf{q} = q_{real} + \mathbf{q}_{imag} \in \operatorname{S}^3$ where $q_{real}$ is the real part and $\mathbf{q}_{imag}$ is the imaginary part of the unit quaternion, the coordinate transformation of $\mathbf{q}$ into the Lie algebra is given as 
\begin{equation} \label{Lie algebra:1}
\mathbf{v} = \arctan \left (\frac{\left \| \mathbf{q}_{imag} \right \|}{q_{real}}\right) \frac{\mathbf{q}_{imag}}{\left \| \mathbf{q}_{imag} \right \| },
\end{equation}
where $\mathbf{v} \in \mathbb{R}^3$. 
Given an element of the Lie algebra $\mathbf{v} \in \mathbb{R}^3$, the coordinate transformation of $\mathbf{v}$ into $\operatorname{S}^3$ is given as 
\begin{equation} \label{Lie algebra:2}
\mathbf{q} = \cos(\left \| \mathbf{v} \right \|) + \sin(\left \| \mathbf{v} \right \|) \frac{\mathbf{v}}{\left \| \mathbf{v} \right \|},
\end{equation}
where $\mathbf{q} \in \operatorname{S}^3$, and $\mathbf{v}$ is considered as a pure imaginary quaternion. 
The map \eqref{Lie algebra:1} is the inverse map of \eqref{Lie algebra:2}; see Sipos\cite{sipos2008application} for more detail.

\item Tangent space: 
This UKF is devised for the systems defined on Riemannian manifolds \cite{hauberg2013unscented} as introduced in Section \ref{sec:UKFs for manifolds}. 
It needs several coordinate transformations between $\operatorname{S}^3$ and its tangent spaces at each estimation step. 
We introduce the coordinate transformation formulas of the Riemannian manifold $\operatorname{S}^3$, logarithm map and exponential map, from the paper by Hauberg.\cite{hauberg2018directional} 
Let us choose a base point $\mathbf{\mu} \in \operatorname{S}^3$ and denote the tangent space of $\operatorname{S}^3$ at $\mathbf{\mu}$ by $T_{\mathbf{\mu}} \operatorname{S}^3=\{\mathbf{u} \in \mathbb{R}^4 \mid \langle \mathbf{u},\,  \mathbf{\mu}\rangle = 0 \}$. A  point $\mathbf{q} \in \operatorname{S}^3$ can be mapped to $T_{\mathbf{\mu}} \operatorname{S}^3$ via the logarithm map,
\begin{align} \label{tangent space 1}
\log_{\mathbf{\mu}}(\mathbf{q}) &= (\mathbf{q} - \mathbf{\mu}(\mathbf{q}^T \mathbf{\mu}))\frac{\theta}{\sin(\theta)}, \nonumber \\
\theta &= \arccos(\mathbf{q}^T \mathbf{\mu}),
\end{align}
with the convention $0 / \sin(0) = 1$. A tangent vector $\mathbf{u} \in T_{\mathbf{\mu}} \operatorname{S}^3$ can be mapped to $\operatorname{S}^3$ via the exponential map,
\begin{equation} \label{tangent space 2}
\exp_{\mathbf{\mu}}(\mathbf{u}) = \mathbf{\mu} \cos(\left \| \mathbf{u} \right \|) + \sin(\left \| \mathbf{u} \right \|) \frac{\mathbf{u}}{\left \| \mathbf{u} \right \|}.
\end{equation}
We use the MVIRT toolbox \cite{bergmann2017mvirt} to compute these coordinate transformations between $\operatorname{S}^3$ and its tangent spaces, and weighted interpolation of points on $\operatorname{S}^3$. 
\end{enumerate}

\begin{figure}
\centerline{\includegraphics{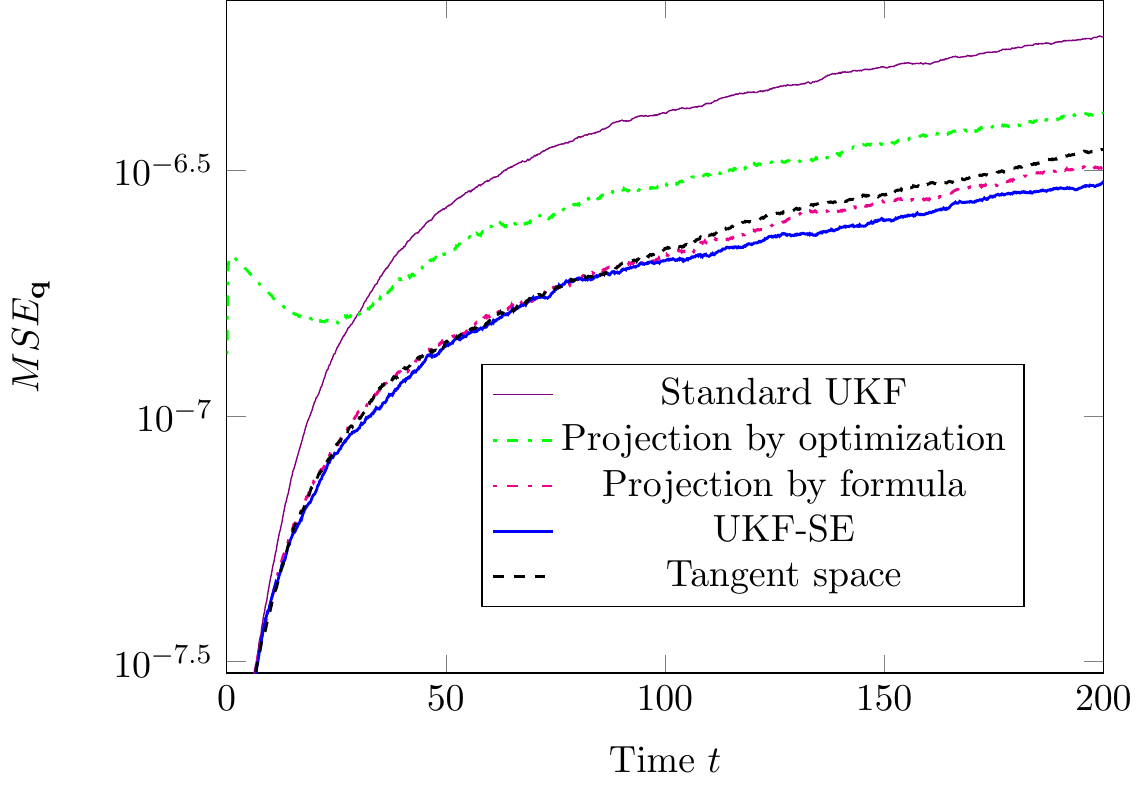}}
\caption{Mean squared error of $\hat{\mathbf{q}}$ versus time $t\, [\si{\second}]$ of each UKFs.}
\label{fig:MSE of q}
\end{figure}

\renewcommand{\arraystretch}{1.3}
\begin{table}[ht]
\centering
\caption{Average $MSE_{\mathbf{q}}$, average $MD_{\mathbf{q}, \operatorname{S}^3}$, and computation time of 1000 simulations.}
\begin{tabular}{| l | c | c | c |}
\hline
Filters & Average $MSE_{\mathbf{q}}$ & Average $MD_{\mathbf{q}, \operatorname{S}^3}$ & Computation time [\si{\second}]\\
\hline
The standard UKF & $5.513\times10^{-7}$ & $4.026\times10^{-4}$ & 798.5 \\
Projection by optimization & $3.873\times10^{-7}$ & $2.310\times10^{-4}$ &  5882.6 \\
Projection by formula & $2.972\times10^{-7}$ & $3.523\times10^{-17}$ & 842.4 \\
UKF-SE (ours) & $2.759\times10^{-7}$ & $1.405\times10^{-6}$ & 869.4 \\
Lie algebra & $1.797\times10^{-4}$ & $5.308\times10^{-15}$  & 3315.1 \\
Tangent space & $3.143\times10^{-7}$ & $3.416\times10^{-17}$ & 5832.2 \\
\hline
\end{tabular}
\label{table:MSE, MD and time}
\end{table}

We plot $MSE_{\mathbf{q}}$ of each UKF as a semi-log plot on Figure \ref{fig:MSE of q}. 
The Lie algebra filter has $MSE_{\mathbf{q}}$ of order $10^{-4}$ which is relatively larger than other filters, because we need to transform the points on $\operatorname{S}^3$ to coordinates of $T_{1} \operatorname{S}^3$ and vice versa at each UKF loop. 
If  points on $\operatorname{S}^3$ are far away from the identity, then they are far away from $T_{1} \operatorname{S}^3$, which naturally  incurs large errors.
So, we do not plot the Lie algebra filter result.
We also do not plot MSE of the angular velocity because there is a negligible difference on MSE of $\Omega_{BI}$ estimates among the UKF's. 
We calculate the average $MSE_{\mathbf{q}}$ along time from $t=150[\si{\second}]$ to $t=200[\si{\second}]$ and record them on the 2nd column of Table \ref{table:MSE, MD and time}. 
The UKF-SE has the lowest estimation error for estimating attitude among the compared UKF's as shown on Figure \ref{fig:MSE of q} and Table \ref{table:MSE, MD and time}, which shows numerically that UKF-SE is the most accurate UKF for systems on manifolds. 
We calculate the average $MD_{\mathbf{q}, \operatorname{S}^3}$ along time from $t=150[\si{\second}]$ to $t=200[\si{\second}]$ and write on the 3rd column of Table \ref{table:MSE, MD and time}.  
The average $MD_{\mathbf{q}, \operatorname{S}^3}$ of UKF-SE is 0.0035 times the average $MD_{\mathbf{q}, \operatorname{S}^3}$ of the standard UKF, which shows that UKF-SE keeps the attitude estimates in close proximity to the manifold due to the stable embedding.
The average $MD_{\mathbf{q}, \operatorname{S}^3}$'s of the projection by formula filter, the Lie algebra filter, and the Tangent space filter are the lowest among the UKF's because they are composed in a specific way that constrains the attitude estimates on the manifold. 
However the average $MD_{\mathbf{q}, \operatorname{S}^3}$'s of them are not exactly 0 because of numerical rounding errors.

We also calculate the computation time of each UKF for 1000 Monte Carlo runs and record them on the 4th column of Table \ref{table:MSE, MD and time}. 
The number of sigma points is 20 for all the simulated UKFs in our paper. We use the concept of precision operations written in the paper by Brent\cite{brent1976fast} to perform a thorough computing complexity analysis of the UKFs. 
Let $M(n)$ be the number of single-precision operations required to multiply $n$-bit integers. 
Elementary operations, such as addition, multiplication, and reciprocals can be evaluated in $O(M(n))$ operations, with a relative error $O(2^{-n})$.\cite{brent1976fast} 
Trigonometric functions, such as sin, cos, arctan can be evaluated in $O(M(n)\log(n))$ operations, with a relative error $O(2^{-n})$.\cite{brent1976fast} 
The UKF-SE shows 1.004 times slower performance than the standard UKF. 
The reason for UKF-SE taking more time than the standard UKF is that UKF-SE  carries out  the additional subtraction of the stable embedding term $\alpha(\left \| \mathbf{q}_k \right \|^2 - 1)\mathbf{q}_k$ for each sigma point in each UKF loop in comparison with the standard UKF. 
The subtraction of the stable embedding term has 17 elementary operations, and there are 20 sigma points. 
Hence, the UKF-SE computes $O(340M(n))$ more operations than the standard UKF in each UKF loop. 
The projection by optimization filter shows the slowest computation time among the simulated filters. 
The reason for this is that the projection by optimization filter solves an optimization problem in each UKF loop. 
The UKF-SE shows a minor difference in computation time from the projection by formula filter. 
The projection by formula filter calculates an additional normalization formula at the end of each UKF loop in comparison with the standard UKF, which is composed of 11 elementary operations. 
Therefore, the projection by formula filter computes $O(11M(n))$ more operations than the standard UKF in each UKF loop. 
It is however noted that a general nonlinear manifold, unlike the 3-sphere $\operatorname{S}^3$, may not even have a closed-form of normalization formula or, may only have computationally expensive normalization algorithms such as polar decomposition for special orthogonal group. 
The Lie algebra filter has 3.81 times slower performance than UKF-SE because the Lie algebra filter computes additional coordinate transformations between the Lie group and its Lie algebra, (\ref{Lie algebra:1}) and (\ref{Lie algebra:2}), for each sigma point at each UKF loop in comparison with the standard UKF. 
Computation of (\ref{Lie algebra:1}) and (\ref{Lie algebra:2}) is composed of 44 elementary operations and 3 trigonometric operations. 
Hence, the Lie algebra filter computes $O((880 + 60\log(n))M(n))$ more operations than the standard UKF at each UKF loop. 
The tangent space filter has 6.71 times slower performance than UKF-SE because the tangent space filter computes additional coordinate transformations between the Riemannian manifold and its tangent spaces, (\ref{tangent space 1}) and (\ref{tangent space 2}), for each sigma point at each UKF loop in comparison with the standard UKF. 
Computation of (\ref{tangent space 1}) and (\ref{tangent space 2}) is composed of 65 elementary operations and 4 trigonometric operations. 
Therefore, the tangent space filter computes $O((1300 + 80\log(n))M(n))$ more operations than the standard UKF in each UKF loop.


\section{Conclusions}\label{sec:Conclusion}

We have devised a simple, accurate, and computationally efficient unscented Kalman filter (UKF) for the state estimation of systems on manifolds. 
The idea is to extend the given system to an ambient Euclidean space, apply the special term which makes the modified system asymptotically stable on the manifold, and apply the standard UKF derived in Euclidean space. 
Our proposed UKF is simple because it uses the off-the-shelf standard UKF devised in Euclidean space directly while keeping the state estimates in close proximity to the manifold. 
We apply this filter to the satellite system model which has attitude state defined on the unit quaternion space. 
We investigate the performance of our devised UKF by comparing with existing methods of applying UKF to systems on manifolds. 
Simulation on the satellite system model shows that our UKF is the most accurate compared to the other UKFs. 
Our UKF also has the minor computation time comparable to the standard UKF devised in Euclidean space, because our filter does not need either any manifold-structure-preserving discretization method or coordinate transformations of other UKFs. 
We plan to expand our stable embedding technique to invariant Kalman filters.\cite{trawny2005indirect}


\section*{Acknowledgments}
This work has been supported by the NRF grant funded by the Korea government (MSIT) (2021R1A2C2010585).

\appendix

\section{The standard UKF} \label{Appendix 1}

The UKF uses the concept of unscented transform to find stochastic properties of state estimates. 
The unscented transform uses sigma points which capture the mean and the covariance of states accurate to the second order.\cite{julier1997new} 
We briefly summarize the standard UKF algorithm devised in Euclidean space described in the paper by Wan and Van der Merwe.\cite{wan2000unscented} 
A discrete-time stochastic system is given as follows:
\begin{align*}
x_{k+1} &= f_D(x_k, u_k, w_k), \\
y_{k} &=  g_D(x_{k}, u_{k}, v_{k}),
 \end{align*}
where $k\in \mathbb{N}_{\geq 0}$, the state is $x_{k} \in \mathbb{R}^{n_x}$, the input is $u_{k} \in \mathbb{R}^{n_u}$, the output is $y_{k} \in \mathbb{R}^{n_y}$, the process noise is $w_{k} \in \mathbb{R}^{n_w}$, the measurement noise is $v_{k} \in \mathbb{R}^{n_v}$, $x_{0} \sim (\overline{x}_{0}, P_{0})$, $w_{k} \sim (0, P_{w, D})$, and $v_{k} \sim (0, P_{v, D})$. 
It is assumed that $x_{0}$, $w_{k}$, and $v_{k}$ are mutually uncorrelated white noises. 
We want state estimates $\hat{x}_{k|k}$ for the system above. 
First, we initialize the state estimate as $\hat{x}_{0|0} = \overline{x}_{0}$ and the covariance estimate as $P_{0|0}=P_{0}$.
Then, we define the augmented state $x_k^a = [x_k^T, w_k^T, v_k^T]^T \in \mathbb{R}^{L}$ for $k\in \mathbb{N}_{\geq 0}$ where $L = n_x + n_w + n_v$. 
The augmented state estimate is given as $\hat{x}_{k|k}^a = [\hat{x}_{k|k}^T, 0 , 0]^T \in \mathbb{R}^{L}$. 
The augmented covariance estimate is given as $P_{k|k}^a=\operatorname{diagonal}(P_{k|k}, P_{w, D}, P_{v, D} )$. 
We iteratively generate sigma points, do time update and do measurement update at each $k\in \mathbb{N}_{\geq 1}$. 
The process for generating sigma points is as follows:
\begin{align*}
&\sigma_{i}^a = [\sigma_i^x, \sigma_i^w, \sigma_i^v] \text{ where } \sigma_i^x \in \mathbb{R}^{n_x}, \sigma_i^w \in \mathbb{R}^{n_w}, \sigma_i^v \in \mathbb{R}^{n_v} \text{ and } i=0,1,2,\cdots ,2L,\\
&\sigma_0^a = \hat{x}_{k-1|k-1}^a,\\
&\sigma_i^a = \hat{x}_{k-1|k-1}^a + \sqrt{L+\lambda}\left [ \sqrt{P_{k-1|k-1}^a} \right ]_i \quad  i=1,\cdots ,L,\\
&\sigma_i^a = \hat{x}_{k-1|k-1}^a - \sqrt{L+\lambda}\left [ \sqrt{P_{k-1|k-1}^a} \right ]_{i-L} \quad  i=L+1,\cdots ,2L,\\
& W_0^{(m)} = \lambda / (L + \lambda), \\
& W_0^{(c)} = \lambda / (L + \lambda) + (1-\gamma^2+\beta), \\
& W_i^{(m)} = W_i^{(c)} = \lambda / {2(L + \lambda)} \quad  i=1,\cdots ,2L,
\end{align*}
where $\left [ \sqrt{P_{k-1|k-1}^a} \right ]_i$  is the $i^{th}$ column of the matrix, and $\lambda=\gamma^2(L+\kappa )-L$. 
The parameters are adopted from the paper by Wan and Van der Merwe\cite{wan2000unscented} which are $\gamma  = 0.01$, $\kappa = 0$, and $\beta=2$.
The process for doing time update is as follows:
\begin{align*}
& \sigma_{i, k|k-1}^x = f_D(\sigma_{i}^x, u_{k-1}, \sigma_{i}^w) \quad  i=0,\cdots ,2L, \\
& \hat{x}_{k|k-1} = \sum_{i=0}^{2L}W_i^{(m)}\sigma_{i, k|k-1}^x, \\
& P_{k|k-1} = \sum_{i=0}^{2L}W_i^{(c)}[\sigma_{i, k|k-1}^x - \hat{x}_{k|k-1}][\sigma_{i, k|k-1}^x - \hat{x}_{k|k-1}]^T, \\
& y_{i, k|k-1} = g_{D}(\sigma_{i}^x, u_{k-1}, \sigma_{i}^v) \quad  i=0,\cdots ,2L, \\
& \hat{y}_{k|k-1} = \sum_{i=0}^{2L}W_i^{(m)} y_{i, k|k-1}.
\end{align*}
The process for doing measurement update is as follows:
\begin{align*}
& P_{yy}=\sum_{i=0}^{2L}W_i^{(c)}[y_{i, k|k-1} - \hat{y}_{k|k-1}][y_{i, k|k-1} - \hat{y}_{k|k-1}]^T, \\
& P_{xy}=\sum_{i=0}^{2L}W_i^{(c)}[\sigma_{i, k|k-1}^x - \hat{x}_{k|k-1}][y_{i, k|k-1} - \hat{y}_{k|k-1}]^T, \\
& K = P_{xy}P_{yy}^{-1}, \\
& \hat{x}_{k|k} = \hat{x}_{k|k-1} + K(y_k - \hat{y}_{k|k-1}), \\
& P_{k|k} = P_{k|k-1} - KP_{yy}K^T,
\end{align*}
where $y_k$ is the measurement and $\hat{x}_{k|k}$ is the state estimate at time step $k$.

\section{A guideline for choosing a value of $\alpha$} \label{Appendix 2}

We are given the following equation with stable embedding (\ref{final:sys:H}) from Section \ref{sub:Simulation}:
\begin{equation*}
\dot{\mathbf{q}}_{BO} = \frac{1}{2}\Lambda (\mathbf{q}_{BO})[0; \Omega_{BO}] - \alpha (\left \| \mathbf{q}_{BO} \right \|^2 - 1)\mathbf{q}_{BO}.
\end{equation*}
We discretize the above equation using the ordinary Euler discretization with time step $h>0$ as below
\begin{align} \label{Eq: discrete dynamics appendix}
\mathbf{q}_{BO, k+1} &=  \mathbf{q}_{BO, k} + h \left ( \frac{1}{2}\Lambda (\mathbf{q}_{BO, k})[0; \Omega_{BO, k}] - \alpha (\left \| \mathbf{q}_{BO, k} \right \|^2 - 1)\mathbf{q}_{BO, k} \right ), \nonumber \\
&=(1 - \alpha h (\left \| \mathbf{q}_{BO, k} \right \|^2 - 1)) \mathbf{q}_{BO, k} + \frac{h}{2}\Lambda (\mathbf{q}_{BO, k})[0; \Omega_{BO, k}] .
\end{align}
\begin{theorem}
If we define the error state as below
\begin{equation*} 
e_k = \left \| \mathbf{q}_{BO, k}\right \|^2 - 1,
\end{equation*}
then the discrete error dynamics of (\ref{Eq: discrete dynamics appendix}) is given as
\begin{equation} \label{Eq: discrete error dynamics}
e_{k+1} = -1 + (1 + e_k) \left ( (1-\alpha h e_k)^2 + \frac{h^2 \left \| \Omega_{BO, k} \right \|^2}{4} \right ).
\end{equation}
\end{theorem}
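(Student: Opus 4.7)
The plan is to square-norm both sides of the discrete dynamics \eqref{Eq: discrete dynamics appendix}. Writing $\beta_k := 1 - \alpha h e_k$ and $\xi_k := \Lambda(\mathbf{q}_{BO,k})[0;\Omega_{BO,k}]$ for brevity, the update reads $\mathbf{q}_{BO,k+1} = \beta_k \mathbf{q}_{BO,k} + \tfrac{h}{2}\xi_k$, so
\[
\|\mathbf{q}_{BO,k+1}\|^2 = \beta_k^2 \|\mathbf{q}_{BO,k}\|^2 + h\,\beta_k\, \mathbf{q}_{BO,k}^T \xi_k + \tfrac{h^2}{4}\|\xi_k\|^2.
\]
The target identity \eqref{Eq: discrete error dynamics} is exactly what comes out of this expansion once two structural facts about the matrix $\Lambda$ are in hand, so the work reduces to establishing those facts and substituting.

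The first fact I need is the orthogonality $\mathbf{q}^T \Lambda(\mathbf{q})[0;\Omega] = 0$ for every $\mathbf{q}\in\mathbb{H}$ and $\Omega\in\mathbb{R}^3$. This kills the cross term $h\,\beta_k\, \mathbf{q}_{BO,k}^T\xi_k$. I would verify it by a direct three-line calculation: the first column of $\Lambda(\mathbf{q})$ is $\mathbf{q}$ itself and is multiplied by $0$, so only the last three columns contribute, and the coefficients of $\omega_1,\omega_2,\omega_3$ in $\mathbf{q}^T\Lambda(\mathbf{q})[0;\Omega]$ each reduce to a pair of canceling terms (e.g.\ the $\omega_1$ coefficient is $-q_0q_1+q_1q_0+q_2q_3-q_3q_2=0$). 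Geometrically this is just the statement that the quaternion kinematics preserve the quaternion norm in continuous time.

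The second fact is the column orthogonality $\Lambda(\mathbf{q})^T\Lambda(\mathbf{q}) = \|\mathbf{q}\|^2 I_4$, which is the standard identity that quaternion left-multiplication is a conformal map on $\mathbb{R}^4$. Checking it amounts to verifying that the four columns of $\Lambda(\mathbf{q})$ have squared norm $\|\mathbf{q}\|^2$ and are pairwise orthogonal, again a routine finite calculation. Once it is available, $\|\xi_k\|^2 = \|\mathbf{q}_{BO,k}\|^2\,\|[0;\Omega_{BO,k}]\|^2 = (1+e_k)\|\Omega_{BO,k}\|^2$.

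Substituting these two evaluations into the squared-norm expansion gives
\[
\|\mathbf{q}_{BO,k+1}\|^2 = (1-\alpha h e_k)^2(1+e_k) + \tfrac{h^2}{4}(1+e_k)\|\Omega_{BO,k}\|^2 = (1+e_k)\!\left[(1-\alpha h e_k)^2 + \tfrac{h^2\|\Omega_{BO,k}\|^2}{4}\right],
\]
and subtracting $1$ yields \eqref{Eq: discrete error dynamics}. The only potential obstacle is keeping the two algebraic identities about $\Lambda$ straight; both are elementary once one writes out the $4\times 4$ block and multiplies. No Lyapunov argument or manifold machinery is needed at this stage, so the proof is essentially a clean algebraic reduction.
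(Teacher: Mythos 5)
Your proposal is correct and follows essentially the same route as the paper's own proof: both compute $\|\mathbf{q}_{BO,k+1}\|^2$ by expanding the squared norm of the Euler update and invoke the two identities $\langle \mathbf{q}_{BO,k},\, \Lambda(\mathbf{q}_{BO,k})[0;\Omega_{BO,k}]\rangle=0$ and $\|\Lambda(\mathbf{q}_{BO,k})[0;\Omega_{BO,k}]\|^2=\|\mathbf{q}_{BO,k}\|^2\|\Omega_{BO,k}\|^2$ to eliminate the cross term and evaluate the quadratic term. Your added justification of these identities via the column structure of $\Lambda(\mathbf{q})$ (first column equal to $\mathbf{q}$, and $\Lambda(\mathbf{q})^T\Lambda(\mathbf{q})=\|\mathbf{q}\|^2 I_4$) is sound and merely makes explicit what the paper labels as ``easy to show.''
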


\begin{proof}
It is easy to show the following identities:
\begin{align*}
&\langle \mathbf{q}_{BO, k} ,\,  \Lambda (\mathbf{q}_{BO, k})[0; \Omega_{BO, k}] \rangle=0, \nonumber \\
&\left \| \Lambda (\mathbf{q}_{BO, k})[0; \Omega_{BO, k}]\right \|^2 = \left \| \mathbf{q}_{BO, k}\right \|^2 \left \| \Omega_{BO, k} \right \|^2.
\end{align*}
With the above identities and the definition of the error $e_k$, 
\begin{align*}
e_{k+1} &= \left \| \mathbf{q}_{BO, k+1}\right \|^2 - 1, \\
&= (1 - \alpha h (\left \| \mathbf{q}_{BO, k} \right \|^2 - 1))^2 \left \| \mathbf{q}_{BO, k}\right \|^2 + \frac{h^2}{4}\left \| \mathbf{q}_{BO, k}\right \|^2 \left \| \Omega_{BO, k} \right \|^2 - 1, 
\end{align*}
which simplifies to \eqref{Eq: discrete error dynamics}.

\end{proof}

The next theorem gives a general guideline for choosing a value of $\alpha$.

\begin{theorem}
Given $\left | e_k \right | \leq \epsilon < 1$, and $\left \| \Omega_{BO, k} \right \| \leq U \;  \forall k$, if we choose $\alpha$, $h$, and $\beta$ that satisfy
\begin{equation} \label{Eq: alpha con1}
0 < \alpha h <\frac{2}{3},
\end{equation}
\begin{equation} \label{Eq: alpha con2}
0 < \beta < \alpha h (1 - \epsilon) (2 - \alpha h \epsilon) \left ( 2 - 2\alpha h - \alpha h (2 - \alpha h) \epsilon + (\alpha h)^2 \epsilon^2 \right ) < 1 ,
\end{equation}
and
\begin{equation} \label{Eq: alpha con3}
\delta(h, \alpha h, \beta)  < \epsilon,
\end{equation}
where
\begin{equation*}
\delta(h, \alpha h, \beta) = h\sqrt{\frac{(1 -2\alpha h (1 - \epsilon) + (\alpha h)^2 \epsilon (1 + \epsilon) ) \epsilon (1 + \epsilon) \frac{U^2}{2} + (1 + \epsilon)^2 \frac{h^2 U^4}{16}}{\alpha h (1 - \epsilon) (2 - \alpha h \epsilon) \left ( 2 - 2\alpha h - \alpha h (2 - \alpha h) \epsilon + (\alpha h)^2 \epsilon^2 \right ) - \beta}},
\end{equation*}
then the set
\begin{equation} \label{Eq: attractor set}
S_\delta = \left \{ e_k \in \mathbb R \mid \left | e_k \right | < \delta(h, \alpha h, \beta) \right \}
\end{equation}
becomes a positively invariant attractor of the discrete error dynamics (\ref{Eq: discrete error dynamics}) with a region of attraction 
\[
S_\epsilon = \left \{ e_k\in \mathbb R \mid \left | e_k \right | \leq \epsilon \right \}.
\]
\end{theorem}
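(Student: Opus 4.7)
The plan is to use $V_k := e_k^2$ as a discrete Lyapunov function and establish a one-step inequality of the form $V_{k+1} \leq (1-\beta) V_k + (\gamma - \beta)\delta^2$ on the region $S_\epsilon$, where $\gamma$ denotes the compound coefficient $\alpha h(1-\epsilon)(2-\alpha h\epsilon)(2-2\alpha h - \alpha h (2-\alpha h)\epsilon + (\alpha h)^2 \epsilon^2)$ that appears as the upper bound on $\beta$ in \eqref{Eq: alpha con2}, and $\delta$ is the quantity defined in the statement. Once this inequality is in hand, the positive invariance of $S_\delta$ and the attraction from $S_\epsilon$ both fall out by splitting the contraction and the slack.

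First I would expand the recursion \eqref{Eq: discrete error dynamics} and arrive at the factorization
\[
V_{k+1} - V_k = -a e_k^2 (1+e_k)(2-a e_k)\psi(e_k) + 2 b_k (1+e_k) e_k \phi(e_k) + b_k^2 (1+e_k)^2,
\]
where $a := \alpha h$, $b_k := h^2\|\Omega_{BO,k}\|^2/4$, $\psi(e) := 2 - 2a - a(2-a) e + a^2 e^2$, and $\phi(e) := 1 - (1+e)a(2-ae)$. The first term is the deterministic contribution, while the other two are linear and quadratic in $b_k$, respectively. The key identity making the cross term clean is $\psi(e) - a(1+e)(2-ae) = 2\phi(e)$.

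Next I would bound each factor over $|e|\leq \epsilon$. Condition \eqref{Eq: alpha con1} forces $a < 2/3$, which places the vertex $(2-a)/(2a) > 1$ of $\psi$ strictly outside $[-\epsilon,\epsilon]$; hence $\psi$ is monotone decreasing on $S_\epsilon$ and $\psi(e) \geq \psi(\epsilon) = 2 - 2a - a(2-a)\epsilon + a^2\epsilon^2 > 0$. Combined with $(1+e_k) \geq 1-\epsilon$ and $(2-a e_k) \geq 2 - a\epsilon$, the deterministic term is at most $-\gamma e_k^2$. For the cross term, writing $\phi(e) = [1 - 2a(1+e)] + a^2 e(1+e)$ and using $|1 - 2a(1+e)| \leq 1 - 2a(1-\epsilon)$ yields $|\phi(e)| \leq 1 - 2a(1-\epsilon) + a^2 \epsilon(1+\epsilon)$; together with $|e_k|\leq\epsilon$, $(1+e_k)\leq 1+\epsilon$, and $b_k \leq h^2 U^2/4$, and absorbing the $b_k^2$ term, this produces exactly the aggregate constant $N := (\gamma - \beta)\delta(h,\alpha h,\beta)^2$ from \eqref{Eq: alpha con3}. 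The resulting bound is $V_{k+1} \leq V_k - \gamma V_k + N$, which I rewrite as $V_{k+1} \leq (1-\beta) V_k + [N - (\gamma - \beta) V_k]$.

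The conclusion is then mechanical. For $V_k \geq \delta^2$ the bracket is non-positive, so $V_{k+1} \leq (1-\beta) V_k < V_k \leq \epsilon^2$, which simultaneously confirms invariance of $S_\epsilon$ and delivers geometric attraction toward $S_\delta$ at rate $1-\beta$. For $V_k < \delta^2$, substituting directly yields $V_{k+1} \leq (1-\gamma)\delta^2 + N = (1-\beta)\delta^2 < \delta^2$, giving positive invariance of $S_\delta$. The main technical obstacle is locating, simultaneously and cleanly, where each of $\psi$, $\phi$, $(1+e)$, $(2-ae)$ attains its extremum on $[-\epsilon,\epsilon]$ so that the compounded bounds match the exact algebraic forms in \eqref{Eq: alpha con2} and \eqref{Eq: alpha con3}; it is precisely conditions \eqref{Eq: alpha con1} and \eqref{Eq: alpha con2} that certify the required monotonicities and positivities.
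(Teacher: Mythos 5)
Your proposal reproduces the paper's proof essentially step for step: the same Lyapunov function $V(e_k)=e_k^2$, the same three-term decomposition of $V(e_{k+1})-V(e_k)$ into the contraction term $-a\,e_k^2(1+e_k)(2-ae_k)\psi(e_k)$ plus pieces linear and quadratic in $h^2\|\Omega_{BO,k}\|^2/4$, the same term-by-term bounds producing $V(e_{k+1})\le(1-\gamma)V(e_k)+(\gamma-\beta)\,\delta^2$, and the same invariance-plus-finite-time-attraction conclusion, with your two-case split on $V_k\gtrless\delta^2$ being only a mild streamlining of the paper's three cases. The one caveat is that your intermediate bound $|1-2a(1+e)|\le 1-2a(1-\epsilon)$ is only valid when $1-2a(1-\epsilon)\ge 0$ and $a\le\tfrac{1}{2}$ (roughly), which conditions \eqref{Eq: alpha con1}--\eqref{Eq: alpha con3} do not guarantee; however, this merely makes explicit the same extremal substitution the paper itself performs without further justification in passing to the last inequality of \eqref{Eq: Lyapunov cal}, so it is not a divergence from the paper's argument.
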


\begin{proof}
From the condition (\ref{Eq: alpha con1}), $\alpha h$ always satisfies the below inequality
\begin{equation*}
0 < \alpha h < \operatorname{min}\left ( \frac{2}{2 \epsilon + 1}, \; \frac{1 + \epsilon - \sqrt{1 - \epsilon^2}}{\epsilon (1 + \epsilon)} \right ),
\end{equation*}
because the infimum of the right side is $\frac{2}{3}$ for $0 < \epsilon < 1$. With $\left | e_k \right | \leq \epsilon < 1$, the following inequalities hold
\begin{align} \label{Eq: cons}
& 0 < 1 - \epsilon \leq 1 + e_k < 1 + \epsilon, \nonumber \\
& 0 < 2 - \alpha h \epsilon \leq 2 - \alpha h e_k , \nonumber \\
& 0 < 2 - 2\alpha h - \alpha h (2 - \alpha h) \epsilon + (\alpha h)^2 \epsilon^2 \leq 2 - 2\alpha h - \alpha h (2 - \alpha h)e_k + (\alpha h)^2 e_k^2. 
\end{align}

Choose a Lyapunov function candidate $V(e_k) = e_k^2$. From (\ref{Eq: discrete error dynamics}) and (\ref{Eq: cons}),
\begin{align}\label{Eq: Lyapunov cal}
V(e_{k+1}) - V(e_k) & = e_{k+1}^2 - e_k^2 \nonumber \\
& = -\alpha h (1 + e_k) (2 - \alpha h e_k) \left ( 2 - 2\alpha h - \alpha h (2 - \alpha h)e_k + (\alpha h)^2 e_k^2 \right ) e_k^2 \nonumber \\
 &\quad  + (1 -2\alpha h (1 + e_k) + (\alpha h)^2 e_k (1 + e_k) ) e_k (1 + e_k) \frac{h^2 \left \| \Omega_{BO, k} \right \|^2}{2} \nonumber \\
 & \quad  + (1 + e_k)^2 \frac{h^4 \left \| \Omega_{BO, k} \right \|^4}{16}  \nonumber \\
& \leq -\alpha h (1 - \epsilon) (2 - \alpha h \epsilon) \left ( 2 - 2\alpha h - \alpha h (2 - \alpha h) \epsilon + (\alpha h)^2 \epsilon^2 \right ) e_k^2 \nonumber \\
&\quad  + (1 -2\alpha h (1 - \epsilon) + (\alpha h)^2 \epsilon (1 + \epsilon) ) \epsilon (1 + \epsilon) \frac{h^2 U^2}{2} + (1 + \epsilon)^2 \frac{h^4 U^4}{16}
\end{align}
since $\left \| \Omega_{BO, k} \right \| \leq U$.

If $\left | e_k \right |$ satisfies the  inequality 
\begin{equation} \label{Eq: ek range 1}
\delta(h, \alpha h, \beta)  < \left | e_k \right | \leq \epsilon,\\
\end{equation}
which is feasible by the condition (\ref{Eq: alpha con3}), then from (\ref{Eq: Lyapunov cal}), $V(e_{k+1}) - V(e_{k}) \leq -\beta e_k^2$. 
Consequently, $V(e_{k+1}) \leq (1-\beta) V(e_k)$ when $e_k$ satisfies (\ref{Eq: ek range 1}). 
Therefore, $e_k$ decreases exponentially in magnitude so as to get into the set $S_\delta$ in  finite time, since $0 < \beta < 1$ from the condition (\ref{Eq: alpha con2})

If $\left | e_k \right |$ satisfies the  inequality 
\begin{equation} \label{Eq: ek range 2}
\delta(h, \alpha h, 0) < \left | e_k \right | \leq  \delta(h, \alpha h, \beta),
\end{equation}
where
\begin{equation*} 
\delta(h, \alpha h, 0) = \left.\begin{matrix}
\delta(h, \alpha h, \beta)
\end{matrix}\right|_{\beta=0} = h\sqrt{\frac{(1 -2\alpha h (1 - \epsilon) + (\alpha h)^2 \epsilon (1 + \epsilon) ) \epsilon (1 + \epsilon) \frac{U^2}{2} + (1 + \epsilon)^2 \frac{h^2 U^4}{16}}{\alpha h (1 - \epsilon) (2 - \alpha h \epsilon) \left ( 2 - 2\alpha h - \alpha h (2 - \alpha h) \epsilon + (\alpha h)^2 \epsilon^2 \right )}},
\end{equation*}
then $V(e_{k+1}) - V(e_{k}) \leq 0$ from (\ref{Eq: Lyapunov cal}), which means $\left | e_{k+1} \right | \leq \left | e_k \right |$. 
Hence, $e_{k+1}\in S_\delta$ if $e_k$ satisfies (\ref{Eq: ek range 2}).

If $\left | e_k \right |$ satisfies the  inequality 
\begin{equation} \label{Eq: ek range 3}
\left | e_k \right | \leq \delta(h, \alpha h, 0),
\end{equation}
then by moving $V(e_k)$ to the right side in (\ref{Eq: Lyapunov cal}) and using (\ref{Eq: ek range 3}),
\begin{align}
V(e_{k+1}) & \leq \left (  1-\alpha h (1 - \epsilon) (2 - \alpha h \epsilon) \left ( 2 - 2\alpha h - \alpha h (2 - \alpha h) \epsilon + (\alpha h)^2 \epsilon^2 \right )\right ) e_k^2 \nonumber \\
&\quad + (1 -2\alpha h (1 - \epsilon) + (\alpha h)^2 \epsilon (1 + \epsilon) ) \epsilon (1 + \epsilon) \frac{h^2 U^2}{2} + (1 + \epsilon)^2 \frac{h^4 U^4}{16} \nonumber \\
&\leq \left (  1-\alpha h (1 - \epsilon) (2 - \alpha h \epsilon) \left ( 2 - 2\alpha h - \alpha h (2 - \alpha h) \epsilon + (\alpha h)^2 \epsilon^2 \right ) \right ) \times \delta(h, \alpha h, 0)^2 \nonumber \\
&\quad + (1 -2\alpha h (1 - \epsilon) + (\alpha h)^2 \epsilon (1 + \epsilon) ) \epsilon (1 + \epsilon) \frac{h^2 U^2}{2} + (1 + \epsilon)^2 \frac{h^4 U^4}{16} \nonumber \\
& = \delta(h, \alpha h, 0)^2,
\end{align}
which implies 
\begin{equation*}
\left | e_{k+1} \right | \leq \delta(h, \alpha h, 0),
\end{equation*}
if $e_k$ satisfies (\ref{Eq: ek range 3}). 
Therefore, $e_{k+1}\in S_\delta$ if $e_k$ satisfies (\ref{Eq: ek range 3}). 
To sum up, every trajectory of the discrete error dynamics (\ref{Eq: discrete error dynamics}) starting in the set $S_\delta$ remains in $S_\delta$ for all forward time, and every trajectory starting in $S_\epsilon$ enters the set $S_\delta$ in finite time and remains in $S_\delta$ from then on. 
In other words, $S_\delta$ is a positively invariant attractor of the error dynamics with $S_\epsilon$ a region of attraction.
\end{proof}

\begin{remark}
For any $0 < \epsilon<1$ and $U>0$, we can find $\alpha$, $h$, and $\beta$ that satisfy the conditions (\ref{Eq: alpha con1}), (\ref{Eq: alpha con2}), and (\ref{Eq: alpha con3}). 
First, we can always find a value for $\alpha h$ that satisfies the third inequality in (\ref{Eq: alpha con2}) by choosing a sufficiently small $\alpha h$, for example. 
Choose a value of $\beta$ that satisfies the first and second inequalities of (\ref{Eq: alpha con2}).
Once $\alpha h$ and $\beta$ are fixed, we can always choose a value of $h$ that satisfies (\ref{Eq: alpha con3}). 
Then, find a value of $\alpha$ that satisfies (\ref{Eq: alpha con1}).
\end{remark}


\nocite{*}
\bibliography{quaternion_UKF_ref}%

\clearpage

\end{document}